\newcommand{\eps}{\varepsilon}
\newtheorem{dfn}{Definition}
\newtheorem{thm}{Theorem}
\newtheorem{lem}{Lemma}
\title{Empirical Differential Privacy}
\author{Paul Burchard, Anthony Daoud, and Dominic Dotterrer}
\date{November 6, 2020}
\begin{document}

\maketitle

\begin{abstract}
We propose a privacy methodology for stochastic queries that requires no or reduced added noise in practice.
In this framework, privacy is estimated from the randomness in the data.
Unlike previous works on noiseless privacy, this empirical viewpoint avoids making any explicit assumptions about the random process generating the data.
Instead, we rely on only an observed set of databases in estimating privacy.
Assuming these databases are drawn i.i.d., we can estimate the privacy of any statistical query, even those with unbounded sensitivity.
Additionally, when the observed privacy is found to be insufficient for a query, we provide a methodology for drawing noise samples which guarantee desired privacy levels.
The primary applications of this work are centered around benchmarking and evaluating privacy protocols for which classical differential privacy either does not apply, is difficult to implement, or does not preserve required business logic.
\end{abstract}

\section{Introduction}

{\em Differential privacy\/}~\citep{dwork2006calibrating,TCS-042} is a popular, rigorous method for protecting individual information when providing statistics about a database.
Differential privacy offers a mathematical guarantee against membership disclosure in the sense that the privatized query responses are statistically indistinguishable whether any particular respondent's data is included or not.
This indistinguishability is specified by a {\it privacy budget}, $\eps$.
In this classical setting, the database is deterministic and privacy is achieved by adding controlled random noise to the query response before reporting the results.
This mechanism has the property that an adversary can know everything about the database except the records of the individual in question, and that individual's information will still be protected.
From this starting point, various relaxations of the definition have been developed~\citep{desfontaines2020sok}.

In practice, the amount of noise that must be added to achieve this strong privacy guarantee is often large enough to limit the utility of the results~\citep{grining2016practical, piotrowska2015some}.
This is especially challenging for sum queries when the summands are unbounded, as such queries require infinite noise to be made private in the classical differential privacy setting.
This limitation is significant, particularly amidst common applications in financial services.
For this reason, ~\cite{Duan09,DuanACM09} and others~\citep{Bhaskar12,Grining17} have introduced the idea of leveraging the randomness in the data itself to help ensure privacy, resulting in so-called {\em noiseless privacy\/} mechanisms.
These methods are applicable when the adversary has imperfect knowledge of the database;
an individual's contribution to a query result is hidden by the unknown random values drawn for the other individuals in the database.

These noiseless methodologies have several limitations.
First, they involve explicit and restrictive assumptions about the probability distribution of the data;
in particular, database records must be drawn independent and identically distributed from a certain known distribution.
Second, they provide no savings on the added noise required when the observed privacy does not meet levels required of the application.

In what follows, we present a notion of {\em empirical differential privacy\/}, which estimates the inherent privacy of a query applied to a random database, but does not place explicit assumptions on the distribution of the row data.
We also explore how to modify this definition in the face of limited adversarial knowledge of the data.

\subsection{Contributions}

\begin{itemize}
   \item A privacy framework known as {\em empirical differential privacy\/} that makes use of empirical randomness in the data to help provide privacy.
    	 We do not require any assumptions over the database distribution, nor do we require that records in the database are independent.
   \item A new metric called {\em total risk\/} ($\delta^*$) which provides an intuitive measure of the risk that a privacy breach to {\em some\/} individual will occur.
   \item A methodology for providing additional noise to the query to meet required privacy standards.
         By taking advantage of the randomness in the data, the approach uses more concentrated noise samples than those that would be provided by classical differential privacy.
   \item Discussion of how post-processing, composition, and adversarial knowledge can be handled in this framework.
   \item Discussion of numerical considerations inherent in the method.
\end{itemize}

\subsection{Related Work}

As previously mentioned, ~\cite{DuanACM09} was one of the earliest to explore a noiseless approach that satisfies a relaxed version of differential privacy making use of the randomness in the data.
They showed that the privacy of sum queries could be ensured under certain restrictive statistical assumptions on how the random database was generated (with independent, identically distributed rows).
A more broadly applicable noiseless model was established by ~\cite{Bhaskar12}.
They are the first to define the notion of ``noiseless privacy,'' and they handle more classes of functions as well as introduce an adversarial model.
To achieve their results, they make assumptions about the distribution from which database entries are drawn and require that most rows are independent of each other.

This approach was then expanded upon by ~\cite{Grining17}.
They allow for dependent data and provide privacy guarantees in non-asymptotic situations, a vital result for more practical use-cases in data privacy.
To calculate privacy when the data is dependent, they require an estimate of the fraction of users compromised by an adversary, as well as the largest dependency "neighborhood."
Such assumptions are valid in certain scenarios, for example when releasing a query over many databases where one is known to be compromised.
However, it may often be difficult to estimate these quantities, which is a limitation that our privacy framework avoids.

A distinct but related idea is {\em random differential privacy (RDP)}, where the randomized query must be differentially private in the usual sense only over a likely set of databases, not the entire set of databases~\citep{Hall_Wasserman_Rinaldo_2013}.
In this setting, the privacy guarantee protects only for these probable query responses.
Furthermore, the randomness in the data is not used to estimate privacy at all.

~\cite{rubinstein2017pain} then build on {\em RDP} in a manner that is more connected with our approach.
Similar to our situation, they assume the sensitivity of the query may be difficult to bound analytically. 
However, they further assume that one is able to sample databases at will from a distribution whose KL divergence from the true distribution of databases is bounded.
They then estimate the sensitivity of the query by evaluating the query on each of the sample databases with and without a single row.
While this allows one to achieve any target {\em RDP} confidence level, it is often practically too difficult to determine an underlying distribution from which databases should be sampled.
Our approach better handles this issue as we only rely on the true set of databases and require no distributional assumptions.
From this, we determine the privacy level inherent in the databases.

We are aware of a previous unrelated use of the term ``empirical differential privacy'' initiated by ~\cite{Abowd13}.
In this paper, there is a single dataset which the authors model as sampled from a statistical process with unknown parameters.
The parameters are estimated using a bayesian paradigm.
They then estimate the privacy of the parameters by dropping an individual out of the dataset before re-calculating the posterior over the parameters.
By comparing the posteriors with and without an individual, they are able to estimate the overall privacy of the model parameters.
While similar in philosophy to our approach, they require an assumption of the statistical model that generates the data as well as the prior over model parameters.
Our approach requires no such assumptions.

\section{Problem Statement}
In our setting, a database is a finite non-empty set of records.
Each record consists of a universal identifier which identifies the respondent or client to which the record refers, along with any amount of numerical data associated to that record, such as timestamps, transaction amounts, additional IDs that could identify location, transaction type and so on.
Each database may have an arbitrary number of records associated to a single client.
The reader may keep in mind the example of a database of all customer transactions on a given calendar day.

We assume that we have independently drawn a collection of databases, $X^1, \dots, X^n$, from an unknown but fixed probability distribution of databases, $X \sim \chi$, amongst a set of all databases, $D$.
For example, the sample databases might arise from random subsampling of a single, large master table.
Another important example arises in the idealized setting in which customer transactions are time-independent and each sample database consists of all the transactions of a particular day, with each day providing a different random sample database.

We further consider a statistical query $f: D \rightarrow R$ which is to be studied.
The query may be a deterministic function of the database, or it may introduce independent noise as is often the case with traditionally differentially private statistical queries.
The result, $f(X)$, is a real valued random variable for which we have some samples $f(X^1), \dots, f(X^n)$.

Our objective is ensure a quantitative level of privacy for the respondents in the sense that the log-likelihood of $f(X)$ given that Client $k$ contributed to database X mutually bounds the log-likelihood of $f(X)$ given that Client $k$ did not contribute. 
Letting $X_k$ represent the distribution of databases without client $k$, we can write this statement as
\[
	\vert \log( \mathbb{P} ( f(X) \vert X ) )  - \log( \mathbb{P} ( f(X) \vert X_k ) ) \vert < \epsilon
\]

It is entirely possible that, by the very nature of $f$, $X \sim \chi$, and $D$, that this condition already holds.
In this case, our objective is to empirically verify the condition.
In the case that the condition does not naturally hold, our objective is to quantify the failure probability $\delta$ and to modify $f$ in much the same way that is prescribed by conventional differential privacy to reduce $\delta$.



\section{Empirical Approach}

In the present work, we take an empirical approach to estimating individual privacy.
At a high level, our approach to determining privacy works by first running the query on each sample database to obtain an original set of query results.
We then drop an individual $i$ from all databases before re-running the query to obtain a second set of results without $i$.
From each of these two sets, we infer a corresponding probability density over the space of query responses.
By comparing these two density functions, we can estimate the individual risk ($\delta_i$) by integrating over the region where the ratio of
the densities differ by greater than $\exp(\eps)$.
Finally, we repeat this process for all individuals in the dataset, and return the max $\delta_i$ as the overall failure probability $\delta$.

Before defining the privacy guarantees of such an approach, we first formulate a suitable definition of differential privacy in the stochastic database paradigm.


\begin{dfn}
\label{dfn.noiseless_dp}
Let $X \sim \chi$ be a random variable taking values in the set of databases, $D$. 
Let $f: D \rightarrow R$ be a deterministic or randomized query.
Assume that individuals $i$ have a unique identity across $D$.
Then let $X_i \sim \chi_i$, $X_i \in D$ denote the resulting distribution of databases with the data of individual $i$ removed.
Denote by $P$ the probability measure of the query results $f(X)$, including both $f$ and $X$ as sources of randomness,
and by $P_i$ the resulting probability measure of the query results $f(X_i)$, including both $f$ and $X_i$ as sources of randomness.
For $\eps > 0, \delta \in [0,1]$, we say that $f$ satisfies {\em inherent $(\eps,\delta)$-differential privacy\/} relative
to $\chi$ iff, for all individuals $i$, all measurable sets of potential query results $E$
\begin{eqnarray*}
P_i(E) &\le& \exp(\eps)P(E) + \delta \\
P(E) &\le& \exp(\eps)P_i(E) + \delta .
\end{eqnarray*}

\end{dfn}

\begin{lem}
Suppose $f$ is a classically $(\eps,\delta)$-differentially private randomized query.
Then for any random distribution $\chi$ of the database $X$, $f$ is inherently $(\eps,\delta)$-differentially private.
However, the converse is not true.
\end{lem}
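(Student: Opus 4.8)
The plan is to prove the forward implication by an averaging (mixture) argument and to establish the failure of the converse with an explicit example. The forward direction is essentially a statement that the pointwise guarantee of classical privacy survives averaging over the database randomness, while the converse is the conceptual heart of the paper: randomness in the data can provide privacy where worst-case analysis cannot.

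For the forward direction, I would first recall that classical $(\eps,\delta)$-differential privacy is a \emph{pointwise} statement. Writing $\mu_x$ for the output law of the randomized query $f$ on the fixed database $x$, it asserts that for every pair of neighboring databases $x,x'$ (here, $x'$ obtained from $x$ by deleting individual $i$'s data) and every measurable $E$,
\[
\mu_{x'}(E)\ \le\ \exp(\eps)\,\mu_{x}(E)+\delta,
\qquad
\mu_{x}(E)\ \le\ \exp(\eps)\,\mu_{x'}(E)+\delta .
\]
Next I would express the inherent measures as mixtures over the database randomness: since the internal randomness of $f$ is independent of $X$, we have $P(E)=\mathbb{E}_{X\sim\chi}[\mu_X(E)]$ and $P_i(E)=\mathbb{E}_{X_i\sim\chi_i}[\mu_{X_i}(E)]$. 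The key structural observation is that $\chi_i$ is by definition the pushforward of $\chi$ under the deletion map $r_i\colon x\mapsto x_i$, so we may couple $X_i=r_i(X)$ and rewrite $P_i(E)=\mathbb{E}_{X\sim\chi}[\mu_{r_i(X)}(E)]$. Under this coupling every realized pair $(X,r_i(X))$ is neighboring, so the pointwise classical bounds above hold almost surely. Taking expectations over $X\sim\chi$ and invoking linearity and monotonicity yields $P_i(E)\le\exp(\eps)P(E)+\delta$ and the symmetric inequality, which is exactly inherent $(\eps,\delta)$-differential privacy. The only care required is measure-theoretic bookkeeping---measurability of $x\mapsto\mu_x(E)$ and validity of the mixture/coupling representation---none of which I expect to pose real difficulty.

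For the converse I would exhibit a single deterministic query that is inherently private yet fails classical privacy. Because $f$ is deterministic, $\mu_x$ is the Dirac mass at $f(x)$; if $f$ is nonconstant on some neighboring pair $x,x'$, then taking $E=\{f(x)\}$ forces $1=\mu_x(E)\le\exp(\eps)\mu_{x'}(E)+\delta=\delta$, so no $\delta<1$ is admissible and classical $(\eps,\delta)$-privacy fails outright---precisely the regime (e.g.\ unbounded-sensitivity sums) where no finite noise can restore it. To see the same $f$ can nonetheless be inherently private, I would choose $\chi$ so that the aggregate contribution of the remaining individuals carries enough randomness: let the other rows sum to a Gaussian $S\sim N(0,\sigma^2)$ while individual $i$ contributes a fixed value $c$, so that $P=N(c,\sigma^2)$ and $P_i=N(0,\sigma^2)$. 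A standard Gaussian-tail estimate then certifies the two $(\eps,\delta)$ inequalities for modest $\eps,\delta$, giving inherent privacy despite the total failure of classical privacy.

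The main obstacle is not the forward implication, which reduces to a routine averaging argument once the coupling $X_i=r_i(X)$ is in place, but rather packaging the converse cleanly: the example must simultaneously make the failure of classical privacy self-evident and make the inherent bound easy to verify. I expect the bulk of the work to be the short Gaussian computation certifying the $(\eps,\delta)$ closeness of $P$ and $P_i$, while the obstruction to classical privacy follows immediately from the Dirac-mass observation.
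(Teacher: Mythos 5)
Your forward direction is essentially the paper's own argument: the paper notes that classical differential privacy gives the two inequalities conditionally on $X$ (i.e.\ $P_i(E|X) = P(E|X_i) \le \exp(\eps)P(E|X)+\delta$ and symmetrically) and then averages over $\chi$. Your version makes the same averaging step precise via the coupling $X_i = r_i(X)$ and the mixture representation $P(E)=\mathbb{E}_{X\sim\chi}[\mu_X(E)]$, which is just a more careful writing of the identical idea; the measure-theoretic bookkeeping you flag is genuinely all that separates the two. Where you go beyond the paper is the converse: the paper simply asserts ``the converse is not true'' with no justification, whereas you supply a concrete witness --- a deterministic, nonconstant query whose Dirac output laws force $\delta\ge 1$ classically, paired with a distribution $\chi$ under which the aggregate randomness of the other rows makes $P=N(c,\sigma^2)$ and $P_i=N(0,\sigma^2)$ close in the $(\eps,\delta)$ sense. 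That example is correct (the Dirac argument kills classical privacy outright, and the Gaussian shift bound is the standard mean-separation computation) and it is exactly the kind of example the paper gestures at elsewhere with unbounded-sensitivity sums, so your proposal is if anything more complete than the paper's treatment.
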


This is because classical differential privacy implies $P_i(E|X) = P(E|X_i) \le \exp(\eps)P(E|X) + \delta$ and $P(E|X) \le \exp(\eps)P_i(E|X) + \delta$ for all $X$.
We can then average over the distribution $\chi$ of $X$.
Note that classical differential privacy protects the individual, which in this instance may be multiple databases rows.

In practice, the probability distribution $\chi$ is rarely known.
Therefore, the probability measures $P$ and $P_i$ must be estimated from independent samples of $X$.
We denote these estimators as $\widehat{P}$ and $\widehat{P_i}$ respectively.
We discuss the determination of these estimators below.

\begin{dfn}
\label{dfn.privacy}
Assume we have sample databases $X^1, \dots, X^n$ independently drawn from $\chi$.
We say that a deterministic or randomized query $f$ satisfies {\em empirical differential privacy\/} if the estimating distribution $\widehat{P}$ of $f(X)$,
and corresponding marginal estimator $\widehat{P_i}$ of $f(X_i)$,
satisfy inherent differential privacy, as above.
(The samples used to construct the empirical measure $\widehat{P}$ include the sample databases $X^1,\ldots,X^n$,
and if $f$ is a randomized query, multiple samples of $f$ drawn for each sample database.)
\end{dfn}

To practically implement this definition, we reformulate the privacy condition in terms of the estimator probability density functions, which we denote $\hat{p}$ and $\hat{p_i}$, respectively.
Moreover, we need a means of estimating the probability of failure $\delta$ for a specified $\epsilon$.
Our approach is to numerically integrate the region of the query response space where the privacy guarantee is not satisfied.
We do this for each individual $i$ to give an individualized probability of failure $\delta_i$.
We then set $\delta$ to be the largest of these across all individuals, which is stated more formally as follows:

\begin{thm}
\label{thm.densities}
Let $\hat{p}$ and $\hat{p}_i$ be probability density estimators with and without individual $i$,
inferred from the sets of samples $\{f(X^1), \ldots f(X^n)\}$ and $\{f(X_i^1), \ldots, f(X_i^n)\}$, respectively.
Then the statistical query $f$ is empirically $(\eps,\delta)$-differentially private if, for all $i$, the densities $\hat{p}$ and $\hat{p}_i$ differ by a factor of at most $\exp(\eps)$, with the exception of a set $\overline{A}$ on which the densities exceed that bound by a total of no more than $\delta$,
where
\begin{eqnarray*}
	\delta_i &=& \max \left( \int_{\overline{A}} (\hat{p}(x) - \exp(\eps) \hat{p}_i(x))_{+} \,dx , \int_{\overline{A}} (\hat{p}_i(x) - \exp(\eps) \hat{p}(x))_{+} \,dx \right), \\
	\delta &=& \max_{\forall i} \delta_i.
\end{eqnarray*}

\end{thm}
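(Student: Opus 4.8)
The plan is to reduce the measure-theoretic inequalities defining empirical $(\eps,\delta)$-differential privacy to a pointwise optimization over the query-response space, in which the extremal event turns out to be exactly the region where one density exceeds $\exp(\eps)$ times the other. Fix an individual $i$. By Definition~\ref{dfn.privacy} (via Definition~\ref{dfn.noiseless_dp}), what must be verified is that for every measurable $E$ both $\widehat{P_i}(E) \le \exp(\eps)\widehat{P}(E) + \delta$ and $\widehat{P}(E) \le \exp(\eps)\widehat{P_i}(E) + \delta$ hold. Expressing the measures through their densities, $\widehat{P}(E) = \int_E \hat{p}\,dx$ and $\widehat{P_i}(E) = \int_E \hat{p}_i\,dx$, the first inequality is equivalent to the single scalar requirement $\sup_E \int_E \bigl(\hat{p}_i(x) - \exp(\eps)\hat{p}(x)\bigr)\,dx \le \delta$.

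First I would identify the maximizing set. Since the integrand $\hat{p}_i - \exp(\eps)\hat{p}$ is a fixed signed integrable function, the integral $\int_E(\hat{p}_i - \exp(\eps)\hat{p})\,dx$ is maximized over measurable $E$ by taking $E$ to be precisely the set on which the integrand is positive, namely $\{x : \hat{p}_i(x) > \exp(\eps)\hat{p}(x)\}$: including any point where the integrand is negative only decreases the value, and omitting any point where it is positive does likewise. This is the familiar fact that the worst-case event for a privacy inequality is the superlevel set of the likelihood ratio, and it yields the extremal value $\int \bigl(\hat{p}_i(x) - \exp(\eps)\hat{p}(x)\bigr)_+\,dx$. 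Running the identical argument on the reversed inequality produces $\int \bigl(\hat{p}(x) - \exp(\eps)\hat{p}_i(x)\bigr)_+\,dx$, so the smallest $\delta$ handling both directions for individual $i$ is the maximum of these two integrals, which is the stated $\delta_i$.

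It then remains to reconcile the domain of integration. Because $\eps > 0$ forces $\exp(\eps) > 1$, the two superlevel sets $\{\hat{p}_i > \exp(\eps)\hat{p}\}$ and $\{\hat{p} > \exp(\eps)\hat{p}_i\}$ are disjoint, and their union is exactly the ``bad'' set $\overline{A}$ on which the two densities differ by more than a factor of $\exp(\eps)$. Outside $\overline{A}$ both positive parts vanish identically, so restricting each integral from the whole response space to $\overline{A}$ leaves its value unchanged, recovering the formula for $\delta_i$ as written. Finally, requiring the privacy inequalities to hold simultaneously for all individuals forces $\delta$ to dominate every $\delta_i$, so setting $\delta = \max_i \delta_i$ is at once sufficient and tight, matching the definition of empirical differential privacy.

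The step I expect to be the main obstacle is the rigorous justification that the supremum over all measurable $E$ is genuinely attained by the superlevel set rather than merely approached --- essentially a Hahn-decomposition argument for the signed measure with density $\hat{p}_i - \exp(\eps)\hat{p}$ --- together with the implicit requirement that $\hat{p}$ and $\hat{p}_i$ be densities against a common dominating measure, so that the pointwise comparison is well defined. Once that measure-theoretic groundwork is in place, the remaining manipulations (disjointness of the superlevel sets, vanishing of the positive parts off $\overline{A}$, and the maximization over $i$) are routine.
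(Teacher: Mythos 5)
Your proposal is correct and follows essentially the same route as the paper: both arguments reduce the privacy inequalities to the bound $\widehat{P_i}(E) - \exp(\eps)\widehat{P}(E) \le \int_{\overline{A}} \bigl(\hat{p}_i(x) - \exp(\eps)\hat{p}(x)\bigr)_{+}\,dx$, the paper by splitting an arbitrary $E$ into $E\cap A$ and $E\cap\overline{A}$ and you by identifying the superlevel set $\{\hat{p}_i > \exp(\eps)\hat{p}\}$ as the extremal event, which is the same inequality with the added (correct) observation that it is tight. Your remarks on the disjointness of the two superlevel sets, their union being $\overline{A}$, and the need for a common dominating measure are sound and slightly more careful than the paper's own presentation.
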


To prove this claim,
let $A$ be the set where the densities differ by less than the required factor (i.e., $\hat{p} \le \exp(\eps)\hat{p}_i$ and $\hat{p}_i \le \exp(\eps)\hat{p}$), and $\overline{A}$ be the remaining set.
Then for any set of potential query results $E$,
\begin{eqnarray*}
\lefteqn{\hat{P}_i(E) - \exp(\eps)\hat{P}(E)} \qquad && \\
&=& \int_E \hat{p}_i(x) \,dx - \int_E \exp(\eps) \hat{p}(x) \,dx \\
&\le& \int_{E \cap A} \exp(\eps) \hat{p}(x)\,dx + \int_{E \cap \overline{A}} \hat{p}_i(x)\,dx \\
&&\qquad\mbox{} - \int_{E \cap A} \exp(\eps) \hat{p}(x) \,dx - \int_{E \cap \overline{A}} \exp(\eps) \hat{p}(x)\,dx \\
&\le& \int_{E \cap \overline{A}} (\hat{p}_i(x) - \exp(\eps) \hat{p}(x))_{+} \,dx \\
&\le& \max\left( \int_{\overline{A}} (\hat{p}(x) - \exp(\eps) \hat{p}_i(x))_{+} \,dx , \int_{\overline{A}} (\hat{p}_i(x) - \exp(\eps) \hat{p}(x))_{+} \,dx \right) \\
&=& \delta_i \leq \delta .
\end{eqnarray*}
The same works with the roles of $\hat{P}$ and $\hat{P}_i$ reversed, proving that the empirical probability measures obey $(\eps,\delta)$-privacy when $\delta$ is chosen to satisfy the above inequalities.

Next we investigate the impact of post-processing and composition on empirical differential privacy.

\begin{lem}
\label{lem.post_process}
Let $f$ be an inherently $(\eps,\delta)$-differentially private query on a random database $X \sim \chi$.
Consider an arbitrary deterministic or randomized function $g$.
Then $g \circ f$ is inherently $(\eps,\delta)$-differentially private.
\end{lem}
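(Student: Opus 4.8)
The plan is to recognize that inherent $(\eps,\delta)$-differential privacy of $f$ is exactly the statement that the two output measures $P$ and $P_i$ on $R$ are $(\eps,\delta)$-indistinguishable, meaning $P_i(E) \le \exp(\eps)P(E)+\delta$ and $P(E) \le \exp(\eps)P_i(E)+\delta$ for every measurable $E$. Since post-processing acts only on the query results and not on the underlying database, the entire argument can be carried out on the output space, treating $P$ and $P_i$ as fixed measures. I would let $Q$ and $Q_i$ denote the laws of $g(f(X))$ and $g(f(X_i))$ respectively, and show that these satisfy the same pair of inequalities. It suffices to prove one of them, say $Q_i(F) \le \exp(\eps)Q(F)+\delta$ for all measurable $F$, since the other follows by exchanging the roles of $P$ and $P_i$ throughout.

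First I would handle the deterministic case. If $g$ is deterministic, then for any measurable set $F$ in the range of $g$ I would set $E = g^{-1}(F)$, so that $Q(F) = P(E)$ and $Q_i(F) = P_i(E)$. The inherent privacy of $f$ applied to the set $E$ then yields $Q_i(F) = P_i(E) \le \exp(\eps)P(E)+\delta = \exp(\eps)Q(F)+\delta$ directly.

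Next I would reduce the randomized case to the deterministic one. The key step is to represent the randomized map $g$ through an external source of randomness: write $g(\,\cdot\,) = h(\,\cdot\,,\omega)$ where $\omega$ is drawn from some distribution $\mu$ independently of both $f$ and $X$, and $h$ is deterministic. For each fixed value of $\omega$ the map $h(\,\cdot\,,\omega)$ is a deterministic post-processing, so the previous paragraph gives $P_i(h(\,\cdot\,,\omega)^{-1}(F)) \le \exp(\eps)P(h(\,\cdot\,,\omega)^{-1}(F)) + \delta$. Integrating this inequality over $\omega \sim \mu$ and using that $\mu$ is a probability measure, so that the additive $\delta$ term integrates back to $\delta$, gives $Q_i(F) \le \exp(\eps)Q(F)+\delta$, as required.

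The main obstacle is entirely measure-theoretic rather than conceptual: justifying the kernel representation $g(\,\cdot\,)=h(\,\cdot\,,\omega)$ with $\omega$ independent of the data, and invoking Tonelli's theorem to interchange the integration over $\omega$ with the measures $Q$ and $Q_i$. Care is also needed to confirm that independence of the internal randomness of $g$ from $X$ and $f$ is the correct hypothesis; without it, post-processing could in principle reintroduce dependence on the data and the claim would fail. Once this representation is in place, the remainder is the routine averaging argument above.
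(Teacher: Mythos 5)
Your proof is correct and takes essentially the same approach as the paper: the deterministic case by pulling the event back through $g^{-1}$ and applying the inherent privacy of $f$ to the preimage, and the randomized case by averaging that inequality over the (independent) internal randomness of $g$. You merely make explicit the measure-theoretic details (the kernel representation of $g$ and the interchange of integrals) that the paper's one-line remark leaves implicit.
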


To prove this, let $g$ be deterministic and consider an event $E$. Then

\begin{eqnarray*}
P[ g(f(X)) \in E] &=& P[ f(X) \in g^{-1}(E)] \\
					&\leq& e^{\eps} P[ f(X_i) \in g^{-1}(E)] + \delta \\
					&=&  e^{\eps} P[g(f(X_i)) \in E] + \delta \\
\end{eqnarray*}

where $g^{-1}(E) = \{ x \in R | g(x) \in E \}$ denotes the pre-image of $E$ under $g$.
If $g$ is randomized, this inequality can be averaged over the different values of $g$.

For empirical privacy, due to the empirical approximation $\widehat{P}$ we are making to the true distribution $P$ of query results,
this post-processing result only holds approximately.
Specifically, $g \circ f$ will be empirically $(\eps',\delta')$-differentially private, where $(\eps',\delta')$ exceeds $(\eps,\delta)$ by any fixed margin with probability approaching zero as the number of samples $g(f(X))$ grows.

Composition of queries $f_1$ and $f_2$ works differently in the empirical than the classical setting.
If we wish to perform another query $f_2$ on the same set of databases $X^1, \dots, X^n$, we must jointly put both queries through the above privacy validation process.
Thus, we look at the two-dimensional empirical probability densities of the joint query $f = (f_1, f_2)$.
Without added noise, it is quite possible for $f_1$ and $f_2$ to be empirically private, but $f = (f_1,f_2)$ to precisely reveal individual information.

As an example, suppose $f_1$ is a query requesting the sum over a column of a database, and $f_2$ is a query requesting the sum over the same column except without individual $i$. 
If we consider these queries independently, they can be arbitrarily private even though together they clearly reveal $i$'s contribution to the sum.

In this framework, we are also able to better quantify the risk of privacy loss.
This is because the $\delta$ in the privacy bounds, being only the worst case for a single individual, often understates the loss of privacy.
Instead, we can determine the probability of failure for each individual, and then combine them to calculate the overall risk of a privacy breach.
We define this more useful metric as total risk ($\delta^*$):

\begin{dfn}
Given an $\eps$, for each individual $i$, let $\delta_i$ be the individual failure probability as defined in Theorem~\ref{thm.densities}.
Then the {\em total risk\/} $\delta^* = 1 - \prod_i ( 1 - \delta_i )$ is an estimate of the probability that some individual will have their privacy compromised by more than $\eps$.
\end{dfn}

This heuristic metric is accurate when the probability of data leakage for individuals is independent, but will underestimate privacy risks when individuals are known to be dependent.
However, in a scenario in which the relationship between individuals is uncertain, this metric can still be useful as it reveals the privacy
risk if the individuals were to be truly independent.
This allows for a more intuitive understanding of the minimum risk of a privacy breach in that scenario.

Although Theorem~\ref{thm.densities} shows how to obtain privacy guarantees for deterministic queries, it may sometimes still be useful to introduce randomized queries, as in the classical setting.
In particular, if we find that the desired deterministic query is not empirically private enough, we can add a smaller amount of noise to the query in the same manner as classical differential privacy, such that combining samples of this noise with the natural randomness of the sample sequence of databases achieves the desired empirical differential privacy.
We discuss this further in Section~\ref{sec.noise}.

\section{Adversarial Setting}

So far, we have assumed that the adversary only has access to the total number of databases and the published values of the statistical query.
Classical differential privacy assumes that the adversary could have access to all of the underlying data except for the one individual whose privacy we are trying to protect.
More typically, there is an intermediate situation in which the adversary knows some of the data.
Moreover, there may be statistical dependency between the data known to the adversary and the rest of the unknown data~\citep{Grining17}.

In the empirical approach, we can handle all of these scenarios in a uniform way by looking at empirical conditional probability distributions instead of absolute distributions.

\begin{dfn}
Let $Z$ denote the set of data the adversary knows.
Given a deterministic or randomized query $f(X)$ on a random database $X$, and another query $g(X)$ that reveals information known to an adversary,
let $P(y|Z)$ denote the conditional probability measure determined by $y = f(X)$, conditional on knowledge $g(X)\in Z$ of the adversary information.
Similarly, let $P_i(y|Z)$ denote the conditional probability measure determined by the sample $f(X_i)$, conditional on knowledge $g(X)\in Z$ of the adversary information.
Then we say that $f$ is {\em inherently $(\eps,\delta)$-differentially private with adversary $g$\/} iff for all individuals $i$, all sets of potential query results $E$, and all sets of potential adversary information $Z$,
\begin{eqnarray*}
P_i(E|Z) &\le& \exp(\eps)P(E|Z) + \delta \\
P(E|Z) &\le& \exp(\eps)P_i(E|Z) + \delta .
\end{eqnarray*}
\end{dfn}

The empirical version of this definition simply substitutes in the estimated empirical measure $\widehat{P}(y|Z)$.

This calculation generally requires a larger sample of databases $X^1, \ldots, X^n$ because we are splitting the total data across values of the adversary information.
To help with this, we can practically restrict the $Z$ we test to larger buckets of the adversary data, at some cost in the accuracy of the privacy metrics.
Nevertheless, this approach is only practical if the amount of adversarial information is limited.

For example, one typical way the adversarial setting comes into play is that even though the data is not public, each individual contributing to the database knows their own contribution.
Thus we need to test differential privacy on the distributions conditional on each individual's contribution.
Note that this will not usually satisfy the full definition above---it will fail for the individual $i$ in question, which is to be expected given perfect knowledge of that individual's contribution.

\section{Numerical Considerations}
\label{sec.numerical}

Although we do not make any explicit statistical assumptions in this empirical approach, there are two important implicit assumptions on which the concluded privacy depends:
\begin{description}
\item[Independent Databases]
To ensure that the empirical probability densities accurately represent the true underlying distribution, the sequence of databases should be a sample of independent draws from the hypothetical distribution of databases.
No other specific statistical assumptions about the distribution of the databases, or statistical properties of the rows within each database, need be made.

Of course, it is essential that the condition of independence of database samples be verified beforehand through the acceptance of an appropriate null hypothesis.

\item[Representative Individuals]
Since the empirical privacy criterion takes a maximum over all individuals in the sample population, it is important that the population contain representative individuals from across the distribution of individuals.

For example, the work of ~\cite{Duan09} makes the explicit assumption that all individuals are statistically identical, and proves noiseless privacy of sum queries under that assumption.
The present method does not require that each individual's data is independent and identically distributed.
However, for statistical inference to be valid, observed data should reflect the population of individuals broadly, which is to say that the samples should provide sufficient coverage of the modes of individual contribution.
\end{description}

For clarity's sake, we now show an algorithm for calculating empirical privacy (Algorithm~\ref{alg:EmpiricalPrivacy}).
This algorithm returns the probability of failure $\delta$ and total risk $\delta^*$ for a specified $\epsilon$.
Critical to this approach is the choice of method for estimating the density functions $\hat{p}, \hat{p}_i$.
Along with this, one should be mindful of the integration method, be it numerical or otherwise, that is chosen in Algorithm~\ref{alg:InferPrivacyRisk}.
We discuss considerations for these concerns in the following paragraphs.

\begin{algorithm}[H]
	\setstretch{1.35}
	\caption{EmpiricalPrivacy}
	\hspace*{\algorithmicindent}\textbf{Input:} databases $X^1, \ldots, X^n$, statistic function $f$, and privacy level $\epsilon > 0$
	\label{alg:EmpiricalPrivacy}
	\begin{algorithmic}[0]
		\State {$\delta, \overline{\delta^*} \gets 0, 1$}
		\ForAll{individuals $i$}
			\For{$j \gets 1$ to $n$}
				\State {$f^j \gets f(X^j)$}
				\State {$X_i^j \gets X_j$ $\setminus$ rows where $i$ is present}
				\State {$f_i^j \gets f(X_i^j)$}
			\EndFor
			\State {$\delta_i \gets InferPrivacyRisk\left(\left\{f^1, \ldots, f^n\right\},
					\left\{f_i^1, \ldots, f_i^n\right\}, \epsilon \right)$}
			\State {$\delta \gets \max \left( \delta, \delta_i \right)$}
			\State {$\overline{\delta^*} \gets \overline{\delta^*} \cdot \left( 1 - \delta_i \right)$}
		\EndFor
		\State \Return $\delta, 1-\overline{\delta^*}$
	\end{algorithmic}
\end{algorithm}

\begin{algorithm}[H]
	\setstretch{1.35}
	\caption{InferPrivacyRisk}
	\label{alg:InferPrivacyRisk}
	\hspace*{\algorithmicindent}\textbf{Input:} sets of points  $q$, $q_i$ that are query results with and
					without individual $i$, and privacy level $\epsilon$
	\begin{algorithmic}[0]
		\State {$\hat{p}(x) := DensityEstimator(x | q)$}
		\State {$\hat{p}_i(x) := DensityEstimator(x | q_i)$}
		\State {$g(x) := \hat{p}(x)- e^\epsilon*\hat{p}_i(x)$}
		\State {$h(x) := \hat{p}
		_i(x)- e^\epsilon*\hat{p}(x)$}
		\State {$g_{pos} \gets$ integral of $\max \left( g(x), 0 \right)$}
		\State {$h_{pos} \gets$ integral of $\max \left( h(x), 0 \right)$}
		\State \Return {$\max \left( g_{pos}, h_{pos} \right)$}
	\end{algorithmic}
\end{algorithm}

There are a variety of methods available to estimate the probability of a query result, $\hat{P}$, from the sample databases. 
The easiest is to generate the empirical distribution function of the samples, which can be done simply by sorting the sample answers.
However, our use case requires a $DensityEstimator$ function in order to determine $\delta$.
To obtain that, one can numerically differentiate the empirical cumulative function.
To balance the bias and variance of the estimate, the numerical differencing should be taken across approximately the square root of the number of sample points.

Another standard method is {\em kernel density estimation\/} (KDE) ~\citep{Silverman86}, which has the benefit of use-case adaptability; the design of the kernel can take into account domain knowledge about the data and application.
Note that the classic approach is to use a kernel with a fixed bandwidth.
However, determining empirical differential privacy is highly dependent on the estimates of the distribution in sparse regions.
A slight overestimation of tail densities can lead to a large overestimation of privacy.
For this reason, our use case requires variable bandwidths, as described in~\cite{breiman1977variable}. 
This is because the quality of the privacy estimate hinges critically on the density estimate in low data regions, and variable width kernels are better able to capture the density of sparse data.

In practice, our approach has been to articulate a model space parametrized by the choice of kernel and the variable bandwidth of that kernel.
We then choose the model that maximizes the out-of-sample log likelihood.

\section{Noise Addition}
\label{sec.noise}

Up to this point, we've only considered using empirical differential privacy to estimate the risk of releasing the responses to a query.
However, we can add noise in a manner analogous to classical differential privacy to achieve a specified privacy level $\epsilon$.
Suppose a data curator calculates the empirical differential privacy of a query and determines that it is insufficient.
The curator instead would like to achieve a specified privacy level $\epsilon$ by adding noise to the query distribution.
When a new query response is drawn from the underlying distribution--whose density function we denote $p(x)$--the curator then draws an independent sample from the noise distribution--whose density function we denote $h(x)$--and adds it to the query response.
This additionally randomized query response is distributed according to the density function $p \star h (x) = \int h(y - x)p(y) dy$, i.e. the convolution of $p$ and $h$.

From the curator's perspective, they only need specify $\epsilon$ and a reasonable methodology to estimate the true distribution of query responses, $p(x) dx$.
We will denote the curator's best estimator as $\hat{p}$.

Our end goal is to find the noise kernel $h$ such that $\hat{p} \star h$ is empirically differentially private with respect to $\hat{p}_i \star h$ for each sensitive entity $i$.
When the curator receives a new draw from $p$, an independent sample from $h$ is added and the result is released.
Thereby, a noise kernel $h$ is more desirable if it has smaller polar moment, $\int |x| h(x) dx$, which is the same as the expected error of the released sample.

First, we will propose candidate distributions for $r=\hat{p} \star h$ and $r_i = \hat{p}_i \star h$.
We will employ the Laplace kernel density estimators with scale $\lambda$.
We select $\lambda$  to guarantee $\epsilon$-empirical differential privacy.
	\begin{restatable}{proposition}{bandwidth}
	\label{prop.bandwidth}
		Let $A$ and $B$ denote two finite sets of real numbers. Let $d_H (A,B)$ denote the Hausdorff distance between $A$ and $B$,
		\begin{gather*}
            d_H (A,B)  = \max \left( \max_{\forall a \in A} \min_{b \in B} |a - b|, \max_{\forall b \in B} \min_{a \in A} |a - b|,\right)
        \end{gather*}
		Further, let $\alpha$ be the Laplace kernel density estimator of the set $A$ (resp. $\beta$ for $B$) with scale $\lambda$.
		Then $\lambda \geq \frac{d_H(A,B)}{\epsilon}$ implies that
	\[
		\vert \log \alpha(x) - \log \beta(x) \vert \leq \epsilon \text{    for all    } x
	\]
    \end{restatable}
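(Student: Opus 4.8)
The engine of the proof is a pointwise, single-kernel estimate. Writing each Laplace kernel as $\phi_c(x)=\frac{1}{2\lambda}\exp(-|x-c|/\lambda)$ and the estimators as $\alpha(x)=\frac{1}{|A|}\sum_{a\in A}\phi_a(x)$ and $\beta(x)=\frac{1}{|B|}\sum_{b\in B}\phi_b(x)$, the reverse triangle inequality gives, for any two centers and any $x$,
\[
\bigl|\log\phi_a(x)-\log\phi_b(x)\bigr|=\frac{\bigl||x-a|-|x-b|\bigr|}{\lambda}\le\frac{|a-b|}{\lambda}.
\]
Thus displacing a single kernel's center by a distance $d$ changes its log-value by at most $d/\lambda$, and under the hypothesis $\lambda\ge d_H(A,B)/\epsilon$ a displacement of size $d_H$ costs a factor of at most $e^{\epsilon}$. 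For singleton sets this already is the whole statement; all the work lies in promoting it to the normalized mixtures.

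The plan is to transport each kernel of $\alpha$ onto a nearby kernel of $\beta$. Taking $|A|=|B|=n$ (as in the intended application, where $A$ and $B$ are the $n$ query responses with and without individual $i$), I would seek a bijection $\sigma\colon A\to B$ with $|a-\sigma(a)|\le d_H$ for every $a$. Given such a $\sigma$, the single-kernel bound yields $\phi_a(x)\le e^{d_H/\lambda}\phi_{\sigma(a)}(x)\le e^{\epsilon}\phi_{\sigma(a)}(x)$; summing over $a$, re-indexing the right side as $\sum_{b\in B}\phi_b(x)$ because $\sigma$ is a bijection, and dividing by $n$ gives $\alpha(x)\le e^{\epsilon}\beta(x)$. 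Running the identical argument through $\sigma^{-1}$ (whose pairs are the same, hence also within $d_H$) gives $\beta(x)\le e^{\epsilon}\alpha(x)$, and the two inequalities together are exactly $|\log\alpha(x)-\log\beta(x)|\le\epsilon$.

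The step I expect to be the real obstacle is the existence of this bijection. Injectivity cannot be waived: the averaging normalizes $\alpha$ and $\beta$ by the \emph{number} of kernels, so a non-injective assignment---for instance the naive nearest-neighbor map, which need not be onto---can concentrate disproportionate mass on a single center of $\beta$ and defeat the term-by-term comparison. The Hausdorff hypothesis only guarantees that each point of $A$ has \emph{some} neighbor in $B$ within $d_H$ and vice versa, which is weaker than a consistent pairing. I would first try to build $\sigma$ via Hall's theorem on the bipartite graph joining $a,b$ whenever $|a-b|\le d_H$, exploiting that in one dimension each neighborhood is a contiguous block of the sorted opposite set. The delicate point is precisely whether the two Hausdorff containments force Hall's condition for \emph{every} subset; I would expect to need to fall back on the canonical index pairing of the two sample sets---database $j$ with and without individual $i$---so that the Lipschitz budget is spent on the per-sample displacement rather than on the Hausdorff distance alone. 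Making that reconciliation precise is where I would concentrate the effort.
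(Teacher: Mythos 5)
Your single-kernel Lipschitz bound and the transport strategy are sound, and you have put your finger on exactly the right pressure point: the required bijection does not exist in general. Take $A=\{0,\,0.1,\,1\}$ and $B=\{0,\,0.9,\,1\}$, so that $d_H(A,B)=0.1$. The only point of $B$ within $d_H$ of either $0$ or $0.1$ is $0$ itself, so Hall's condition fails for the subset $\{0,0.1\}$ and no $d_H$-bounded matching exists. Worse, this is not merely an obstruction to your route---it is a counterexample to the proposition as stated. With $\epsilon=1$ and $\lambda=d_H/\epsilon=0.1$, at $x=0.1$ one computes
\[
\frac{\alpha(0.1)}{\beta(0.1)}=\frac{e^{-1}+e^{0}+e^{-9}}{e^{-1}+e^{-8}+e^{-9}}\approx 3.71,\qquad \log\frac{\alpha(0.1)}{\beta(0.1)}\approx 1.31>\epsilon .
\]
The paper's own proof breaks at the corresponding spot: it replaces the denominator $\frac{1}{n}\sum_{j} \exp(-|x-b_j|/\lambda)$ by the single nearest-kernel term $\exp(-|x-b_x|/\lambda)$ and then invokes LogSumExp, but these two estimates bound the log-ratio from \emph{opposite} sides, so they cannot be chained inside an absolute value, and the $\log(1/n)$ and $\log(n)$ terms do not actually cancel. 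What the max/min estimates legitimately yield is only $|\log\alpha(x)-\log\beta(x)|\le d_H(A,B)/\lambda+\log n$, and the example above shows the additive $\log n$ cannot be dropped.

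Your proposed fallback is the correct repair. Pair the two sample sets canonically---$a_j=f(X^j)$ with $b_j=f(X^j_i)$---and require $\lambda\ge \max_j |a_j-b_j|/\epsilon$ (equivalently, use the bottleneck matching distance in place of $d_H$, of which $d_H$ is only a lower bound). Then your term-by-term argument closes: $\exp(-|x-a_j|/\lambda)\le e^{\epsilon}\exp(-|x-b_j|/\lambda)$ for each $j$, summing over $j$ and dividing by the common $n$ gives $\alpha(x)\le e^{\epsilon}\beta(x)$, and symmetrically $\beta(x)\le e^{\epsilon}\alpha(x)$. That is a true statement, obtained at the price of a stronger---but in the intended application entirely natural---hypothesis on $\lambda$; as written, however, neither your argument nor the paper's proves the proposition, because the proposition is false.
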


See the appendix for the proof of this proposition.

Let $\hat{P}$ and $\hat{P}_i$ respectively denote the empirical sample of query responses with and without the sensitive subject's data included.
We compute $\lambda_i = \frac{d_H(\hat{P}, \hat{P}_i)}{\epsilon}$ and let $\lambda = \max_i \lambda_i$.
With this $\lambda$ we have now determined the estimators $r$ and $r_i$ described above.
What remains is to solve for the noise kernel by {\it deconvolution} \citep{diggle1993fourier}.
Specifically, we apply the Fourier transform to both sides of $r = \hat{p} \star h$ to obtain:

\begin{equation}
    h(x) = \mathcal{F}^{-1} \left( \frac{\mathcal{F} r}{ \mathcal{F}\hat{p} } \right)
\end{equation}

where $\mathcal{F}$ is the Fourier transform.
We note for the reader that the curator's inference step has heretofore been unconstrained, but at this stage we will require that $\hat{p}$ have a Fourier transform that does not vanish on any frequency.
But we also arrive at a more substantial issue: How can we be sure that the $h$ we obtain from $r = \hat{p} \star h$ and from $r_i = \hat{p}_i \star h$ are in fact the same?
Here we constrain the generation of $\hat{p}$ to being a kernel density estimator with user-specified kernel, $k$, so that
\[
\begin{array}{lll}
	\hat{p} (x) &=& \frac{1}{n} \sum_{y \in \hat{P}} k(x - y) \text{ and} \\
	r (x) &=& \hat{p} \star h \\
	      &=& \frac{1}{n} \sum_{y \in \hat{P}} (k \star h)(x - y)
\end{array}
\]
Since $r$ is itself a Laplace kernel density estimator, the problem of computing $h$ therefore amounts to the deconvolution:
\begin{equation}
    h(x) = \mathcal{F}^{-1} \left( \frac{\mathcal{F} \ell_\lambda}{ \mathcal{F} k} \right)
\end{equation}

where $\ell_\lambda (x - y) = \frac{\exp \left( \frac{-\vert x - y \vert}{\lambda} \right)}{2 \lambda}$ is the Laplace kernel of scale $\lambda$ determined above.

We then arrive at a partial noising mechanism which achieves the required privacy level.
	\begin{thm}
	\label{thm.noisy_edp}
	Assume we have sample databases $X^1, \dots, X^n$ independently drawn from $\chi$.
	Furthermore, assume a data curator provides a deterministic or randomized query $f(X)$, a desired privacy level $\epsilon$,
	and a distributional inference kernel $k$, so that $\hat{p}$ is written $\hat{p}(x) = (k \star \hat{P})(x) = \frac{1}{n} \sum_{j \leq n} k(x - f(X^j))$.
	Let $h(x) = \mathcal{F}^{-1} \left( \frac{\mathcal{F} \ell_\lambda}{ \mathcal{F} k} \right)$ as above.
	Then the mechanism defined by
		\[
			z(X) = f(X) + y \
		\]
	where $y \sim h$, preserves $\epsilon$-empirical differential privacy.
	\end{thm}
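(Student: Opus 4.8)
The plan is to show that, once the noise is added, the curator's estimated density of the released response $z(X)=f(X)+y$ coincides exactly with a Laplace kernel density estimator, so that Proposition~\ref{prop.bandwidth} applies directly and forces the failure probability to vanish. Since $y\sim h$ is drawn independently of $f(X)$, and the density of a sum of independent variables is the convolution of their densities, the estimated density of the released response is $r=\hat p\star h$; likewise the estimated density with individual $i$ removed is $r_i=\hat p_i\star h$. The whole argument therefore reduces to identifying $r$ and $r_i$ explicitly.

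First I would verify the key algebraic identity $k\star h=\ell_\lambda$. Applying the Fourier transform to $h=\mathcal F^{-1}(\mathcal F\ell_\lambda/\mathcal F k)$ gives $\mathcal F h=\mathcal F\ell_\lambda/\mathcal F k$, hence $\mathcal F(k\star h)=\mathcal F k\cdot\mathcal F h=\mathcal F\ell_\lambda$, and inverting yields $k\star h=\ell_\lambda$. Substituting into $\hat p(x)=\frac1n\sum_{j\le n}k(x-f(X^j))$ and using bilinearity of convolution, I obtain $r=\hat p\star h=\frac1n\sum_{j\le n}\ell_\lambda(x-f(X^j))$, which is precisely the Laplace kernel density estimator of $\hat P$ with scale $\lambda$; the identical computation gives $r_i=\frac1n\sum_{j\le n}\ell_\lambda(x-f(X_i^j))$, the Laplace estimator of $\hat P_i$. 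Crucially, the single kernel $h$ depends only on $k$ and $\lambda$, not on which sample it is convolved with, which is exactly what guarantees that the deconvolutions for $r$ and $r_i$ produce the same $h$.

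With $r$ and $r_i$ identified as Laplace estimators of $\hat P$ and $\hat P_i$ (both over $n$ points), I would invoke Proposition~\ref{prop.bandwidth}. Since $\lambda=\max_i\lambda_i\ge d_H(\hat P,\hat P_i)/\epsilon$ for every $i$, the proposition yields $|\log r(x)-\log r_i(x)|\le\epsilon$ for all $x$, i.e. $e^{-\epsilon}r_i(x)\le r(x)\le e^{\epsilon}r_i(x)$ pointwise. Integrating over any measurable set $E$ gives $r_i(E)\le e^{\epsilon}r(E)$ and $r(E)\le e^{\epsilon}r_i(E)$, so in the language of Theorem~\ref{thm.densities} the exceptional set $\overline A$ is null and $\delta_i=0$ for every $i$. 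Taking the maximum over $i$ gives $\delta=0$, so $z$ is empirically $(\epsilon,0)$-differentially private, which is the claim.

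The main obstacle I expect lies in the Fourier manipulations rather than in the privacy accounting, which is immediate once $r,r_i$ are pinned down. Concretely, the deconvolution requires $\mathcal F k$ to be nonvanishing (as the text stipulates) so that $h$ is well-defined, and one should check that $h$ is integrable and normalized, $\int h=\mathcal F\ell_\lambda(0)/\mathcal F k(0)=1$, so that it is a legitimate noise density; positivity of $h$ is not automatic and may have to be imposed as a constraint on the admissible inference kernels $k$. These analytic caveats aside, the heart of the proof is the clean identity $k\star h=\ell_\lambda$, which collapses the noised estimators onto the Laplace estimators already covered by Proposition~\ref{prop.bandwidth}.
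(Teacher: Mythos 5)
Your proposal is correct and follows essentially the same route as the paper, which presents the argument as the build-up to the theorem rather than as a formal proof: design $h$ by deconvolution so that $k \star h = \ell_\lambda$, observe that $r = \hat{p}\star h$ and $r_i = \hat{p}_i \star h$ are then Laplace kernel density estimators of $\hat{P}$ and $\hat{P}_i$ with scale $\lambda = \max_i d_H(\hat{P},\hat{P}_i)/\epsilon$, and invoke Proposition~\ref{prop.bandwidth} to get the pointwise $\epsilon$ bound and hence $\delta = 0$. The analytic caveats you flag (nonvanishing $\mathcal{F}k$, and whether $h$ is a genuine nonnegative density, i.e.\ whether $\ell_\lambda$ lies in the image of convolution by $k$) are exactly the ones the paper acknowledges at the end of Section~\ref{sec.noise}.
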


\begin{figure}
	\begin{center}
	\setlength{\epsfxsize}{3in}
	\epsffile{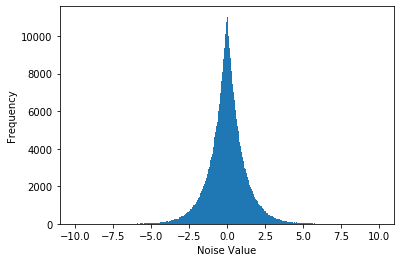}
	\end{center}
	\caption{The noise distribution obtained from a curator specified Gaussian KDE.
				The curator applied Silverman's rule of thumb to obtain a kernel width of $1.06 \cdot \hat{\sigma} \cdot n^{-\frac{1}{5}} \sim 0.002$.
				Analysis of the Hausdorff distance of samples yielded $\lambda \sim 0.89$.
				The figure shows a histogram of 1,000,000 samples.}
	\label{fig.noise_kernel}
\end{figure}

At this point, we should speak to the importance of choosing $k$.
Obviously, $k$ can always be chosen to sufficiently blur the results and justify adding no additional noise.
But the ointent of $\hat{p}$ is to be a good estimator of the probability of future samples, which is why we suggest that $k$ be chosen as a likelihood maximizer of a holdout sample in order to ensure a ``fair'' $k$ has been chosen.

We will conclude the section with some analytical considerations arising from the operation of convolution by $k$ (denote the operator by $Kf = k \star f$).
The operator $K$ has an invariant action on the subspace of probability densities (a high dimensional simplex), but the restricted action of $K$ on this space is not surjective.
For our purposes, however, all that is required is that $\ell_\lambda$ is in the image of $K$.
For standard kernels, such as Gaussian and Laplace, this can be achieved provided that the scale of of $k$ is small relative to $\lambda$.

As an example noise distribution, see Figure~\ref{fig.noise_kernel}.
The diagram shows a histogram of samples from the resulting noise kernel.

\section{Applications}

Empirical differential privacy is applicable for datasets in which component databases have some shared notion of identity and are drawn from the same distribution.
A dataset of this nature could, for example, come from a utility company that has collected household electricity usage data over multiple years.
Each year's data could be considered a separate database on which a statistic of interest is computed.
In this case, the entity to be protected is an individual's contribution to the statistic.

In another scenario, a mall could collect wifi access point data about which phones connected to the system.
Each access point's data would compose a database, and statistics on each access point could be released while protecting the privacy of the users.

A couple other examples of interest include  queries that select a random subset of rows from some master dataset, and queries that inject noise at the local level to the query response.
The former occurs ubiquitously in machine learning, where the query is to build a model given a random train-test split of a dataset.
In this instance, each random subset of rows can be thought of as a database, and the privacy inherent in the model may be estimated.
In the latter case, an upstream non-rigorous privacy protocol may inject noise into the data before computing the query.
If we have no knowledge of the algorithm that injects that noise (we can't see upstream), we may want to evaluate empirically how much that protocol privatizes the query.

While it is easy to imagine further use cases, public datasets rarely have the requisite identifying information as it is an obvious privacy violation.
Even datasets with pseudo-anonymous identifiers are difficult to find because these can be deidentified (for a notorious example, see~\cite{taxis}).
As a result, our dataset will serve as a proxy for more realistic use cases and is intended to be an illustrative example of how to achieve empirical differential privacy.

We are using the Global Precipitation Climatology Project's monthly precipitation dataset from 1979-2019~\citep{gpcp_data}.
In this example, each data point is associated with a location on a latitude and longitude grid, as well as a value for the average daily precipitation (mm/day) at that location for that month.
We consider each year's worth of data as a database, and we would like to release the annual precipitation averaged across all locations.
We will be attempting to protect each location's contribution to this average.
This means that each location is an individual in our framework.
To connect back to the household electricity usage example, we can think of each location in this dataset as analogous to an address of a home.
We can also view precipitation values as analogous to each home's electricity usage.

We are given the average daily precipitation in each month per location, which we convert to the the annual total precipitation per location.
Our dataset then consists of 41 databases, each of which contain the same 10368 individuals.
The average precipitation in each year is shown in Figure~\ref{fig.year_vs_precip}.
To get a sense of the spread of precipitation values by location, the standard deviation of precipitation values within each year is on average 686 mm.

Normally, one should verify statistically that the each database is i.i.d., a necessary condition for empirical differential privacy.
This can be done by accepting a null hypothesis based on a standard statistical test (e.g. T-test).
That test is omitted here as this example is only a proxy for a real application.

\begin{figure}
	\begin{center}
	\setlength{\epsfxsize}{3in}
	\epsffile{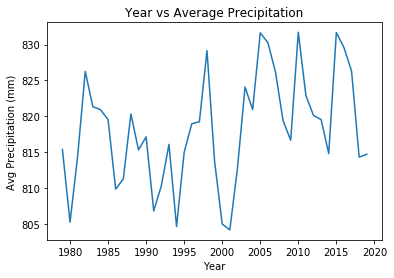}
	\end{center}
	\caption{Annual precipitation values averaged across all locations in each year.}
	\label{fig.year_vs_precip}
\end{figure}

Given that we know little about the shape of the probability distribution, we will use Kernel Density estimation with Laplace Kernels.
In general, we recommend maximizing the out-of-sample log likelihood to determine the appropriate bandwidth.
Note that this step does not consume any privacy budget, as the empirical differential privacy paradigm assumes the consumer of the calculated $\delta$ is trusted.

With this setup, and with varying $\eps$, the performance characteristics for $\delta$ and $\delta^*$ are shown in Figures~\ref{fig.eps_vs_delta} and~\ref{fig.eps_vs_total_delta}.
\begin{figure}
	\centering
	\begin{minipage}[c]{.45\textwidth}
	\centering
	\includegraphics[width=\linewidth]{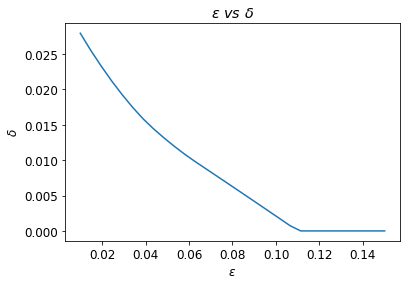}
	\caption{$\delta$ values for varied $\eps$}
	\label{fig.eps_vs_delta}
	\end{minipage}
	\hfill
	\begin{minipage}[c]{.45\textwidth}
	\includegraphics[width=\linewidth]{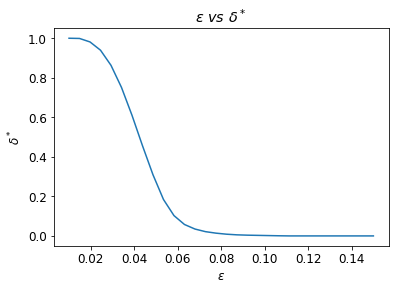}
	\caption{$\delta^*$ values for varied $\eps$}
	\label{fig.eps_vs_total_delta}
	\end{minipage}
\end{figure}

\begin{figure}
	\begin{center}
	\setlength{\epsfxsize}{3in}
	\epsffile{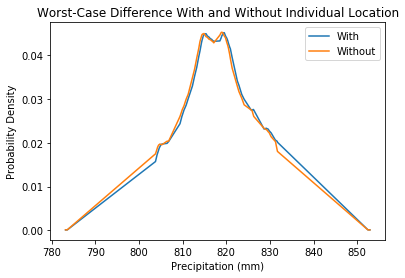}
	\end{center}
	\caption{Empirical probability density of the average annual precipitation dropping the individual with the biggest $\delta$, compared with the empirical probability density including all individuals.}
	\label{fig.worst_density}
\end{figure}

Based on Figure~\ref{fig.eps_vs_delta}, it is clear that all individuals are protected when $\epsilon$ is $0.12$ or greater.
This is true even though the query is unbounded.
Moreover, Figure~\ref{fig.eps_vs_total_delta} shows a clear measure of how likely it is that any individual will have their private information exposed.
From this graph, $\epsilon = 0.085$ can be chosen as another reasonable privacy value because $\delta^*$ is only a miniscule $0.007$.
At that $\epsilon$, there are only 6 individuals for which $\delta_i$ is nonzero, and only 2 for which it is  greater than $0.001$.
The estimated probability density with and without the greatest $\delta_i$ individual is shown in Figure~\ref{fig.worst_density}.
Observing such graphs provides clarity as to how in particular an individual contributes to the distribution of query results.

Thus, the strength of empirical differential privacy is that it gives data controllers an intuitive way to understand the risk of privacy breaches given a particular query.
In this case, it seems that releasing the average annual precipitation will maintain at least $(0.12, 0)$-empirical differential privacy,
so a data controller can feel reassured that providing answers to the query will minimally expose individuals' information.
Note that this example featured a theoretically unbounded average as the query, and databases in which all individuals are always present.
However, neither of these need be the case.
Empirical differential privacy works on arbitrary stochastic queries and allows each component database to have missing individuals.
Lastly, the only trade-off in this example is the value for $\eps$ vs $\delta$.
This is unlike classical differential privacy, which features a privacy vs utility tradeoff.
Empirical differential privacy does allow for this, however.
One could add noise to the query results in a manner analogous to the classical approach to achieve greater privacy.
We explain the process of augmenting the stochastic query with additional noise in Section~\ref{sec.noise}.

\section{Acknowledgements}

Thank you to Andrew Trask, Ishaan Nerurkar, Alex Rozenshteyn and Oana Niculaescu for helpful discussions.

\bibliography{empirical_updated}
\bibliographystyle{apalike}

\newpage
\section{Appendix}

This appendix details a proof for Proposition~\ref{prop.bandwidth}.
Before we provide the proof, we require a few lemmas.
The first is an elementary inequality.
\begin{lem}
	\label{lem.logsumexp}
	Given a set of real numbers $\{x_1, \ldots, x_n\}$, it is true that
	\[
		\log \left( \sum_{i=1}^n \exp(x_i) \right) \leq \max_{\forall i} x_i +\log(n)
	\]
\end{lem}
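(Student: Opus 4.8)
The plan is to reduce everything to the single observation that a sum of $n$ nonnegative terms is at most $n$ times its largest term, and then to invoke monotonicity of the logarithm. This keeps the argument entirely elementary, with no need for convexity, Jensen's inequality, or any norm-comparison machinery; the $\log(n)$ slack on the right-hand side is exactly the penalty one pays for replacing each summand by the maximal one.

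Concretely, I would first set $M = \max_{\forall i} x_i$. Since $\exp$ is increasing, each term satisfies $\exp(x_i) \le \exp(M)$, and summing over the $n$ indices gives
\[
\sum_{i=1}^n \exp(x_i) \;\le\; n\,\exp(M).
\]
Both sides are strictly positive, so applying the (increasing) logarithm preserves the inequality, and using $\log(n\,\exp(M)) = \log(n) + M$ yields
\[
\log\left( \sum_{i=1}^n \exp(x_i) \right) \;\le\; \log(n) + M \;=\; \max_{\forall i} x_i + \log(n),
\]
which is precisely the claim.

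I do not anticipate a genuine obstacle here: the statement is a one-line consequence of the crude bound $\sum_i \exp(x_i) \le n\exp(M)$, and the only points requiring care are that $\exp$ and $\log$ are both monotone increasing (so neither step flips the inequality) and that the sum is positive (so the logarithm is defined). It is worth remarking for completeness that the bound is tight: equality holds exactly when all the $x_i$ are equal, in which case $\sum_i \exp(x_i) = n\exp(M)$ and the two sides coincide. This confirms that the additive $\log(n)$ term cannot be improved in general.
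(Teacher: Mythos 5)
Your proof is correct, and the paper itself offers no proof of this lemma (it is merely stated as ``an elementary inequality''), so your argument---bounding each $\exp(x_i)$ by $\exp(\max_i x_i)$, summing, and taking logarithms---is exactly the standard one the authors implicitly rely on. The remark on tightness when all $x_i$ are equal is a nice bonus but not needed.
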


In proving Proposition~\ref{prop.bandwidth}, an intermediate step implies a need to evaluate a function at all real values for two separate sets of points.
The following lemma replaces that equation with an equivalent one that requires simply comparing the two sets of points.
\begin{lem}
	Let $A$ and $B$ denote finite sets of real numbers.
	Let $d(x, A)$ denote the distance of a point $x$ to the nearest member of set $A$ (resp. $B$).
	Recall the Hausdorff distance between $A$ and $B$,

	\begin{gather*}
            d_H (A,B)  = \max \left( \max_{\forall a \in A} \min_{b \in B} |a - b|, \max_{\forall b \in B} \min_{a \in A} |a - b|,\right).
	\end{gather*}

	We have the equality,

	\[
		\max_{x \in \mathbb{R}} \left\vert d(x,A)-d(x,B) \right\vert = d_H(A,B)
	\]
\end{lem}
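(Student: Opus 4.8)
The plan is to establish the claimed equality by proving the two inequalities $\max_{x} |d(x,A)-d(x,B)| \ge d_H(A,B)$ and $\max_{x} |d(x,A)-d(x,B)| \le d_H(A,B)$ separately. For the lower bound I would exploit the fact that the left-hand side is a maximum over all real $x$, so it suffices to evaluate the integrand at a few well-chosen points. Concretely, pick $a \in A$ attaining $\max_{a \in A} \min_{b \in B}|a-b| = \max_{a \in A} d(a,B)$. Since $a \in A$ we have $d(a,A) = 0$, hence $|d(a,A) - d(a,B)| = d(a,B)$, which is exactly the first term in the definition of $d_H$. Symmetrically, evaluating at a witness point $b \in B$ recovers the second term $\max_{b \in B} d(b,A)$. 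Taking the larger of the two shows the left-hand side is at least $d_H(A,B)$.

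For the upper bound I would fix an arbitrary $x \in \mathbb{R}$ and show $|d(x,A)-d(x,B)| \le d_H(A,B)$. By the symmetry of the statement in $A$ and $B$, assume without loss of generality that $d(x,A) \ge d(x,B)$. Let $b^* \in B$ be a nearest point to $x$, so that $d(x,B) = |x - b^*|$, and let $a^* \in A$ be a nearest point to $b^*$, so that $|b^* - a^*| = d(b^*, A) \le d_H(A,B)$ by definition of the Hausdorff distance. The triangle inequality then yields $d(x,A) \le |x - a^*| \le |x - b^*| + |b^* - a^*| = d(x,B) + d(b^*,A) \le d(x,B) + d_H(A,B)$, which rearranges to $d(x,A) - d(x,B) \le d_H(A,B)$, as required.

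The main obstacle, and indeed the only real content, is the upper bound: one must convert a bound that holds at the level of sets into a pointwise bound on the difference of distance functions. The key idea is to \emph{relay} the nearest point $b^*$ in $B$ over to $A$, paying a cost of at most $d_H(A,B)$, since by definition every point of $B$ lies within $d_H(A,B)$ of $A$. Finally, because the upper bound holds for every $x$ while the lower bound exhibits a specific $x$ (a witness point in $A$ or $B$) at which the value $d_H(A,B)$ is attained, the supremum is in fact a maximum and equals $d_H(A,B)$, completing the proof.
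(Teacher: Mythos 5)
Your proof is correct and complete. Note that the paper itself does not prove this lemma at all---it simply defers to a citation (\cite{conci2018distance})---so there is no in-paper argument to compare against. Your two-inequality decomposition is the standard and, in my view, the right way to make the result self-contained: the lower bound by evaluating at witness points of $A$ and $B$ (where one of the two distance functions vanishes), and the upper bound by the ``relay'' argument, routing from $x$ through its nearest point $b^* \in B$ and then to a nearest point $a^* \in A$ of $b^*$, with the triangle inequality charging the detour to $d(b^*,A) \le d_H(A,B)$. The reduction to the case $d(x,A) \ge d(x,B)$ is legitimate by the symmetry of $d_H$, and your closing observation that the supremum is attained (because the pointwise upper bound is matched at an explicit witness in the finite set $A \cup B$) justifies writing $\max$ rather than $\sup$ in the statement. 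Including an argument along these lines would make the appendix self-contained, which is a modest but genuine improvement over the bare citation.
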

This Lemma is proven, for example, in~\cite{conci2018distance}.

	We are now ready to prove Proposition~\ref{prop.bandwidth}, which we've restated first for convenience.
	\bandwidth*
	\begin{proof}
    \begin{align*}
        \max_{\forall x \in \mathbb{R}} \vert \log \alpha(x) - \log \beta(x) \vert &= \\
    &= \max_{\forall x} \left\vert \log \left( \frac{\frac{1}{2\lambda n}\sum_{i=1}^{n} \exp \left( -\frac{|x-a_i|}{\lambda} \right)}
        {\frac{1}{2\lambda n}\sum_{i=1}^{n} \exp \left( -\frac{|x-b_i|}{\lambda} \right)} \right) \right\vert \\
    &= \max_{\forall x} \left\vert \log \left( \frac{1}{n} \right) + \log \left( \frac{\sum_{i=1}^{n} \exp \left( -\frac{|x-a_i|}{\lambda} \right)}
        {\frac{1}{n}\sum_{i=1}^{n} \exp \left( -\frac{|x-b_i|}{\lambda} \right)} \right) \right\vert \\
    &\leq \max_{\forall x} \left\vert \log \left( \frac{1}{n} \right) + \log \left( \frac{\sum_{i=1}^{n} \exp \left( -\frac{|x-a_i|}{\lambda} \right)}
        {\exp \left( -\frac{|x-b_x|}{\lambda} \right)} \right) \right\vert \text{ ($b_x$ is nearest to~$x$) } \\
    &= \max_{\forall x} \left\vert \log \left( \sum_{i=1}^{n} \exp \left( \frac{1}{\lambda}
        \left( |x-b_x|-|x-a_i| \right) \right) \right) - \log(n) \right| \\
    &\leq \max_{\forall x} \left\vert \max_{\forall i} \left( \frac{1}{\lambda}
        \left( |x-b_x|-|x-a_i|  \right)  \right) \right\vert \text{(by LogSumExp)}\\
    &= \frac{1}{\lambda} \max_{\forall x} \left\vert d(x, B) - d(x, A) \right\vert \\
	&= \frac{1}{\lambda} d_H (A,B) \\
    &= \epsilon \\
    \end{align*}
    \end{proof}
\end{document}